\documentclass[sigconf,balance=false]{acmart}
\usepackage{popets}

\setcopyright{popets}
\copyrightyear{YYYY}

\acmYear{YYYY}
\acmVolume{YYYY}
\acmNumber{X}
\acmDOI{XXXXXXX.XXXXXXX}
\acmISBN{}
\acmConference{Proceedings on Privacy Enhancing Technologies}
\settopmatter{printacmref=false,printccs=false,printfolios=true}

\usepackage{algorithmic}
\usepackage{graphicx}
\usepackage{todonotes}
\usepackage{xspace}
\usepackage{amsthm}

\newcommand{\orig}{\textsc{Original}\xspace}
\newcommand{\alledges}{\textsc{FullGraph}\xspace}
\newcommand{\nograph}{\textsc{NoEdges}\xspace}
\newcommand{\gcn}{\textsc{Gcn}\xspace}
\newcommand{\gat}{\textsc{Gat}\xspace}

\newcommand{\sage}{\textsc{GraphSage}\xspace}
\newcommand{\cora}{\textsc{Cora}\xspace}

\newcommand{\pubmed}{\textsc{PubMed}\xspace}
\newcommand{\chameleon}{\textsc{Chameleon}\xspace}

\newtheorem{experiment}{Experiment}

\usepackage{subcaption}    
\usepackage{caption}
\usepackage{amsmath}

\usepackage{makecell}
\usepackage{multirow}
\usepackage{rotate}
\usepackage{xcolor}
\usepackage{enumitem}
\usepackage{booktabs}

\begin{document}

\title{Impact of Graph Structure on Membership-Inference Risk for Graph Neural Networks}


\author{Megha Khosla}
\orcid{0000-0002-0319-3181}
\affiliation{%
  \institution{Delft University of Technology}
  \city{Delft}
  \country{The Netherlands}}
\email{m.khosla@tudelft.nl}


\renewcommand{\shortauthors}{Khosla}

\begin{abstract}
  Graph neural networks (GNNs) are widely used for tasks such as node classification and link prediction, but their use in sensitive settings raises concerns about training-data leakage. Prior work on privacy leakage in GNNs largely borrows assumptions from non-graph domains, overlooking the role of graph structure. We argue for a graph-specific analysis of privacy risk and study how graph structure affects node-level membership inference. We  formalize membership inference (MI) over node–neighborhood tuples and  investigate two important dimensions: (i) training-graph construction and (ii) inference-time edge access. 
 We compare \emph{snowball sampling}, a structure-aware procedure, with \emph{uniform random node sampling} for constructing training graphs.
  Our experiments show that snowball sampling often hurts generalization relative to random sampling due to its coverage bias. In contrast, allowing access to inter–train–test edges at inference improves test accuracy, reduces the train–test gap, while also having a strong and setting-dependent effect on membership advantage. These results show that graph structure directly shapes privacy risk. We further show that the generalization gap, measured as the performance difference between training and test nodes, is an \emph{incomplete} proxy for membership inference risk: membership advantage can rise or fall independently of changes in this gap, with inference-time edge access often playing a crucial role. Theoretically, we show that for node-level tasks, standard privacy-auditing results based on membership inference do not directly carry over to inductive graph settings, because training and test nodes are structurally dependent rather than interchangeable. We release the code and data at \url{https://github.com/PriXAI/GraphStructurePrivacyAnalysis-public}.

\end{abstract}

\keywords{Membership-inference, Graph Neural Networks, Differential Privacy }

\maketitle

\section{Introduction}
\label{sec:intro}
Graph Neural Networks (GNNs) are widely employed for learning on relational data, delivering state-of-the-art results in chemistry, recommender systems, knowledge graphs, and more \cite{gaudelet2021utilizing,schulte2021integration,sanchez2020learning}. However on the downside, GNNs have been shown to leak private information about the very graphs they are trained on \cite{duddu2020quantifying,conti2022label,he2021stealing,jnaini2022powerful,wang2024subgraph,wu2021adapting,dai2022comprehensive,wang2021membership,olatunji2021membership}. Existing privacy risk studies for GNNs mainly adapt techniques from the i.i.d. scenarios \cite{shokri2017membership}, treating each example as independent and largely ignoring the graph structure that connects data points together.

We focus on membership inference (MI) risk: given access to a trained model, an adversary aims to determine whether a specific data instance was included in its training set. Studying membership inference is important not only because membership itself may constitute sensitive information, but also because it serves as a proxy for stronger privacy violations such as attribute inference or data reconstruction attacks. Intuitively, if an adversary cannot reliably determine whether a given instance participated in training, then reconstructing hidden properties or recovering parts of the training data becomes substantially harder.

In graph-based machine learning using graph neural networks, what is considered as a data instance depends on the task: node-level tasks view each node as an instance (e.g., predicting a protein’s function in a PPI network), edge-level tasks treat each edge as an instance (e.g., recommending a friendship), and graph-level tasks consider the entire graph as a single instance (e.g., classifying a molecule’s toxicity). Our study focuses on node-level tasks, where instances (being nodes) are inherently interdependent due to the existence of edges between them.  This inter-instance dependency is largely absent from prior privacy analyses and leads us to ask: 

\begin{quote}
\emph{How strongly does the amount of edge information available during training and at inference modulate node-level membership-inference risk?}
\end{quote}
\paragraph{Training Graph Construction.} To investigate this question, we start from the
inductive setting of \cite{olatunji2021membership}, where training and test node sets are disjoint
and, during training, neither test nodes nor any train–test edges are available. A common way to
instantiate such splits is to sample train/test nodes uniformly at random and use the induced
subgraphs. While this preserves  independence among sampled nodes, it is often
unrealistic. For instance, random node sampling can produce isolated nodes when none of their neighbors are included, which undermines the rationale for graph-based learning.
\paragraph{Structure Based Sampling and Coverage Bias} A more plausible  sampling strategy is \emph{snowball sampling}, in which, starting from random seed nodes, we iteratively add a fixed number of randomly chosen neighbors until the desired training size is reached. Variants of snowball sampling are quite popular in real-world scenarios such as for building social-network graphs \cite{mislove2007measurement} and for contact tracing in healthcare \cite{kennedy2021snowball}. Specifically following a snowball procedure help us in generating a well-connected training graph. Nevertheless, the generated sample is coverage-biased in the sense that the expansions tend to remain
within a few regions, oversampling high-degree hubs while leaving peripheral or distinct communities
under-represented. The resulting training distribution is therefore non-i.i.d.\ with respect to the
full graph, motivating our first axis of inquiry:
\emph{How does biased coverage affect generalization to held-out nodes and, in turn, membership-inference risk?}
\subsubsection*{Edge Access at Inference Time} A second graph-specific axis is \emph{edge access at inference}. In our node--neighborhood formulation, each query supplies a subgraph, and a GNN’s prediction for a given node depends on that subgraph at query time. Even with a fixed train--test node split, adding or withholding edges can substantially alter the outputs of a frozen model. Unlike i.i.d.\ domains, an adversary may legitimately provide relational context when probing membership. We motivate this axis further through a practical example. Consider a disease-spread network constructed via contact tracing, where investigators start from an initial set of patients and expand the trace by following contacts. In such settings, even if an adversary has access to the broader population cohort, they may not know the realized tracing pattern, either partially or completely.
We therefore ask: 
\emph{How does the adversary’s edge knowledge modulate MI success?}

In prior literature MI success is often linked to overfitting and generalization error \cite{yeom2018privacy}.
Because the true data-generating distribution is unknown, practitioners use the empirical
train–test \emph{gap} as a surrogate. Yet prior work suggests that attacker performance does not
always increase with this empirical gap in graph settings \cite{olatunji2021membership}.
Here, the gap itself depends not only on which nodes are selected for train/test but also on which
neighborhoods are presented at inference. This leads to our third guiding question: \emph{What is the relationship between the generalization gap and membership-inference risk across training-graph constructions (random vs.\ snowball) and inference-time edge regimes?}
\subsubsection*{Exchangeability and Privacy Auditing.} Finally, while we empirically studied membership advantage under different settings, we do not attempt to translate these results into a formal lower bound on the differential privacy budget of GNNs or to audit differential private GNNs, as is common in literature on privacy auditing \cite{kazmi2024panoramia}. The simple reason for that is such a formal connection cannot be always guaranteed under the differential privacy notion and when considering specific training graph construction schemes for inductive learning settings. To support our arguments, we adapt the definition of \emph{statistical exchangeability} from \cite{humphries2023investigating} to GNNs by treating each example as a node–neighborhood tuple, and show that inductive train/test constructions, whether via random node sampling or snowball sampling, break exchangeability because neighborhoods are restricted to, and dependent on, the sampled training subgraph. This breaking of statistical exchangeability (as shown in Section \ref{sec:theory}) implies that the data-generating distribution may already leak information about node membership.
In such cases, differential privacy, being oblivious to the data generating distribution, may not be auditable under practical node-level membership inference attacks.


\section{Background}
We provide background on the graph sampling strategies, Graph Neural Networks (GNNs), and membership advantage used throughout this work.
\subsection{Graph Sampling Strategies} We employ two graph sampling strategies: \textit{random sampling} and \textit{snowball sampling}. These strategies determine not only which nodes are included in the training set but also the resulting graph structure, which in turn influences both model learning and the adversary’s ability to infer membership. We chose random sampling as it is the simplest and the most widely used strategy in academic literature to construct train graphs for training graph neural networks. In contrast, as discussed in Section~\ref{sec:intro}, structured sampling techniques such as snowball sampling are more commonly employed when constructing graphs from real-world data, especially in building social science surveys and in healthcare studies.

\textbf{Random Sampling.} Formally, given a graph $\mathcal{G} = (\mathcal{V}, \mathcal{E})$, where $\mathcal{V}$ is the set of $n$ nodes and $\mathcal{E}$ is the set of $m$ edges, we randomly sample $N$ nodes independently and uniformly to form the sampled node set $\mathcal{V}_S$. The training graph is then constructed by including an edge $(u,v)$ if and only if both $u, v \in \mathcal{V}_S$ and $(u,v) \in \mathcal{E}$.

\textbf{Snowball Sampling.} Here we start from an initial set of nodes $\mathcal{V}^{(0)}$ (in our experiments we randomly chose 10 nodes from each class to form $\mathcal{V}^{(0)}$). At each stage $i$ we choose $k$ neighbors of each of the nodes in $\mathcal{V}^{(i-1)}$. This is equivalent to obtaining a sample of incident edges of $\mathcal{V}^{(i-1)}$ which we denote by $\mathcal{E}^{(i)}$. We construct $\mathcal{V}^{i}$ by adding only the new nodes discovered in the last stage. The process continues until the maximum number of nodes are sampled. The sampled train graph then is $\mathcal{G_S}=(\mathcal{V}_S, \mathcal{E}_S)$ where $\mathcal{V}_S=\cup_{i}\mathcal{V}^{(i)}$ and $\mathcal{E}_S=\cup_{i}\mathcal{E}^{(i)}$

\subsection{Graph Neural Networks}
\label{sec:GNNs}
Graph Neural Networks (GNNs) are a class of machine learning models designed specifically to learn on graph-structured data. The typical input to a GNN consists of the graph $\mathcal{G}=(\mathcal{V},\mathcal{E})$ where $\mathcal{V}$ and $\mathcal{E}$ denotes the set of nodes and edges (connecting pairs of nodes) respectively. In addition the node features matrix $\mathbf{X} = (\mathbf{x}_{1}^\top, \mathbf{x}_{2}^\top, \ldots, \mathbf{x}^\top_{n})$ is provided where $\mathbf{x}_{i}$ is the input feature vector for node $i$. In the supervised setting, each node in the training set is associated with a one-hot encoded label vector $\mathbf{y}_i$, and the model is trained to predict these labels for unseen nodes.

At each GNN layer, a node's representation is updated by aggregating its own features along with the features of its neighbors. After this aggregation step, the node's representation is transformed using a non-linear function. More formally, let $\mathbf{x}_{i}^{(\ell)}$ denote the feature representation of node $i$ at layer $\ell$, and let $\mathcal{N}_{i}$ be the set of nodes in the 1-hop neighborhood of node $i$. The graph convolution operation at layer $\ell$ is defined as follows:

\begin{align}
\label{eq:aggre}
\mathbf{z}_{i}^{(\ell)}=&\operatorname{AGGREGATION}^{(\ell)}\left(\left\{\mathbf{x}_{i}^{(\ell-1)},\left\{\mathbf{x}_{j}^{(\ell-1)} \mid j \in{\mathcal{N}}_{i}\right\}\right\}\right) \\
    \mathbf{x}_{i}^{(\ell)}= &\operatorname{TRANSFORMATION} ^{(\ell)}\left(\mathbf{z}_{i}^{(\ell)}\right)
\end{align}
At the final layer (denoted $L$), a softmax function is applied to the node representations to produce the predicted class probabilities for each node. This can be expressed as:
\begin{align}
\label{eq:pred}
     \hat{\mathbf{y}}_i\leftarrow \operatorname{softmax}(\mathbf{z}_{i}^{(L)}\mathbf{\Theta}),
\end{align}
Here, $\hat{\mathbf{y}}_i \in \mathbb{R}^{c}$, where $c$ is the number of classes, and $\mathbf{\Theta}$ is a learnable weight matrix. The $jth$ element $\hat{\mathbf{y}}_i$ represents the predicted probability that node $i$ belongs to class $j$. We focus on three representative aggregation operations in GNNs, which serve as the basis for more complex aggregation methods. The concrete realizations of the aggregation and transformation operations for the three GNN architectures employed in this work are provided in Appendix~\ref{sec:aggre}.

\subsection*{Train--Test Paradigms}
Graph-based learning typically distinguishes between two train--test paradigms depending on whether test nodes are visible during training.
In the \textbf{transductive} setting, the full graph $\mathcal{G}$ is available during training, although labels are only known for the train nodes. Consequently, test nodes can still influence learned representations through message passing and neighborhood aggregation. By contrast, in the \textbf{inductive} setting, train and test nodes are disjoint, and neither test nodes nor train--test edges are exposed during training.
Following \cite{olatunji2021membership}, we focus on the inductive setting, where the distinction between member and non-member nodes remains unambiguous.
\subsection{Why Graph Structure Matters for GNN Learning and Privacy}

We note that the  aggregation mechanism of the GNNs (as depicted in \eqref{eq:aggre}) makes the learned representation of a node directly dependent on the graph structure. Specifically, the representation of a node is influenced not only by its own features, but also by the features and connectivity of nodes in its neighborhood. Consequently, changes in the neighborhood structure, such as adding or removing edges and neighboring nodes, modify the information aggregated by the model and can therefore alter the resulting node representations and predictions.

\subsubsection{Availability and Composition of Node Neighborhoods}

The above discussed dependence on structure is relevant during both training and inference. During training, graph sampling strategies determine which neighborhoods are visible to the model, which nodes repeatedly participate in aggregation, and what structural patterns the model learns to rely on. As shown in Section~\ref{sec:sampled_nodes}, different sampling strategies can substantially alter the structural properties of the resulting training graph. For example, snowball sampling tends to bias the sample toward high-degree nodes because such nodes are more likely to be reached during recursive neighborhood expansion. In contrast, under uniform random node sampling, all nodes are sampled with equal probability, but the observed degree distribution in the induced subgraph shifts toward lower values because only edges between sampled nodes are retained. 

\subsubsection{Structural Properties and Representation Learning}

The differences in availability and composition of node neighborhoods induced by different sampling strategies as discussed above directly affect GNN behavior. For instance, high-degree nodes typically receive richer neighborhood information during aggregation, whereas isolated or sparsely connected nodes provide limited context. Similarly, nodes connected to neighbors with similar labels or features are generally easier for GNNs to learn from. Consequently, the structural properties of the sampled training graph directly determine the node--neighborhood patterns observed during training and therefore influence model generalization. When the sampled graph does not adequately represent the structural diversity of the original graph, the learned representations may generalize unevenly across different regions of the network. In particular, structure-based sampling strategies such as snowball sampling can over-represent specific graph regions and connectivity patterns, leading to uneven graph coverage and increasing the behavioral gap between member and non-member nodes, which can potentially make membership inference easier.

\subsubsection{Availability of Graph Structure at Inference Time}
Since neighborhood aggregation is performed also during inference, the edges and neighborhoods accessible at query time directly influence the predictions of a frozen GNN. Consequently, even for a fixed trained model, predictions for the same node can vary depending on the graph structure provided during inference. In Section~\ref{sec:node_neighborhoods}, we empirically analyze this effect by comparing the output distributions of the same trained model under different inference-time graph access regimes. We observe that changing only the neighborhood structure can significantly alter the output distributions produced by the model, which in turn can affect membership inference attack success.

\subsection{Membership Advantage}

To quantify privacy leakage under membership inference attacks, we follow \cite{yeom2018privacy} and use the notion of \emph{membership advantage}. Intuitively, membership advantage measures how well an adversary can distinguish members from non-members beyond random guessing. An attacker with no useful information should behave similarly on both groups, resulting in an advantage close to zero, whereas a larger advantage indicates stronger membership leakage.

Formally, let \(b \in \{0,1\}\) denote the membership status of a queried instance, where \(b=0\) indicates that the instance belongs to the training set and \(b=1\) indicates a non-member instance. Let \(\hat{b} \leftarrow \mathcal{A}(\cdot)\) denote the membership prediction of the adversary. The membership advantage of \(\mathcal{A}\) is defined as
\[
\operatorname{Adv}(\mathcal{A})
=
\Pr(\hat{b}=0 \mid b=0)
-
\Pr(\hat{b}=0 \mid b=1).
\]
that is, the difference between the true positive rate and false positive rate of the attacker.

In practice, the attacker outputs a membership score \(s(x)\in [0,1]\), representing the predicted likelihood that an input belongs to the training set. Given a threshold \(\tau\), the attacker predicts membership whenever \(s(x)\geq \tau\). The empirical membership advantage at threshold \(\tau\) is therefore defined as:
\[
\widehat{\operatorname{Adv}}_{\tau}(\mathcal{A})
=
\widehat{\operatorname{TPR}}(\tau)
-
\widehat{\operatorname{FPR}}(\tau),
\]
where \(\widehat{\operatorname{TPR}}(\tau)\) and \(\widehat{\operatorname{FPR}}(\tau)\) denote the empirical true positive rate and false positive rate of the attacker at threshold \(\tau\), respectively.

We then compute the empirical membership advantage as the maximum difference between the true positive rate and false positive rate across all thresholds:
\[
\widehat{\operatorname{Adv}}(\mathcal{A})
=
\max_{\tau}
\left(
\widehat{\operatorname{TPR}}(\tau)
-
\widehat{\operatorname{FPR}}(\tau)
\right).
\]
Equivalently, this corresponds to the maximum vertical separation between the ROC curve of the attacker and the diagonal random-guessing baseline.

\section{Node-level Membership Inference}
\label{sec:nodeMI}
We study the problem of node-level membership inference in Graph Neural Networks (GNNs), where an adversary aims to determine whether a queried node was part of the training graph used to train a target model.
To formalize node-level membership inference in GNNs, we first explicitly define the input instance corresponding to a node \(v\). Let \( \mathbf{x}_v \in \mathbb{R}^d \) denote the feature vector of node \(v\), \( y_v \in \mathcal{Y} \) its label, and \( \mathcal{N}^L_{\text{strategy}}(v) \) its \(L\)-hop neighborhood computed according to a neighborhood selection strategy (e.g., induced neighborhood, sampled neighbors, or full graph). Given a training graph \( \mathcal{G}_S = (\mathcal{V}_S, \mathcal{E}_S) \sim \mathcal{S}(\mathcal{G}, n) \), constructed by sampling \(n\) nodes from the original graph \( \mathcal{G} \) according to a sampling strategy \( \mathcal{S} \), the node-level input representation presented to the model is defined as
\begin{equation}
\label{eq:input}z_v^{\mathcal{G}_S}
:=
\left(
\mathbf{x}_v,
y_v,
\mathcal{N}^L_{\text{strategy}}(v),
\left\{
\mathbf{x}_u
:
u \in \mathcal{N}^L_{\text{strategy}}(v)
\right\}
\right). 
\end{equation}


\subsection{Node-Level Membership Inference Experiment}

We now define the node-level membership inference (MI) experiment while making explicit the role of graph sampling and neighborhood construction. For notational convenience, we restrict ourselves to multi-class settings where each node $v$ is associated with a single ground truth label $y_v$.

\begin{experiment}
(Node-Level Membership Inference~$\operatorname{Exp}(\mathcal{A}, \Phi, n, \mathcal{S})$)
\label{exp:nodelevel}
Let \( \mathcal{A} \) be an adversary, \( \Phi \) a learning algorithm, \( n \in \mathbb{N} \) a sample size, and \( \mathcal{S} \) a graph sampling strategy.

\begin{enumerate}

    \item Sample a subgraph \( \mathcal{G}_S = (\mathcal{V}_S, \mathcal{E}_S) \sim \mathcal{S}(\mathcal{G}, n) \).

    \item Train a GNN model \( \Phi_S = \Phi(\mathbf{X}_S, \mathcal{G}_S) \).

    \item Choose a bit \( b \leftarrow \{0,1\} \) uniformly at random.

    \item If \( b=0 \), sample a member node \( v \sim \mathcal{V}_S \); otherwise sample a non-member node \( v \sim \mathcal{V}\setminus\mathcal{V}_S \).

    \item Construct the node tuple
    \[
    z_v :=
    \left(
    \mathbf{x}_v,
    y_v,
    \mathcal{N}^{L}(v),
    \left\{
    \mathbf{x}_u : u \in \mathcal{N}^{L}(v)
    \right\}
    \right),
    \]
where \(\mathcal{N}^{L}(v)\) denotes the \(L\)-hop neighborhood of node \(v\) computed based on the adversary's graph knowledge \(\mathcal{G}_{Adv}\).

    \item The adversary outputs a membership prediction based on
    \[
    \mathcal{A}(\mathbf{x}_v, z_v, \Phi_S, \mathcal{G}_{Adv}).
    \]

\end{enumerate}

\end{experiment}

Given the trained model $\Phi_S$, target node $v$, and its corresponding node tuple $z_v$ constructed using adversarial graph knowledge from some graph $\mathcal{G}_{Adv}$, the goal of the adversary is to determine whether $v$ was part of the sampled training graph $\mathcal{G}_S$.

\subsection{Differences from Classical MI Settings}

Compared to classical membership inference attacks \cite{yeom2018privacy}, the graph setting introduces two important differences. First, unlike classical membership inference settings where inputs are independent feature vectors, the learning algorithm here operates on the node–neighborhood representations defined in \eqref{eq:input}.  Consequently, the sampling process used to construct the training graph itself becomes part of the MI experiment, as reflected in Step~1 and 2 of Experiment~\ref{exp:nodelevel}.

Second, the adversary’s capabilities depend on access to graph structure. Since GNN predictions depend on neighborhood aggregation, the queried input tuple constructed in Step~5 of Experiment~\ref{exp:nodelevel} explicitly includes the neighborhood $\mathcal{N}^L(v)$, constructed using adversary's knowledge of graph $\mathcal{G}_{Adv}$. Consequently, the validity and strength of the threat model directly depend on assumptions about the adversary’s knowledge of the underlying graph structure as also illustrated in the following example.

\paragraph{\textbf{Example}} A social scientist or healthcare provider may sample part of a population network to study voting behavior or disease spread and subsequently train a GNN on the collected subgraph. In such settings, the broader population network may itself be publicly or partially observable, while the specific subset of individuals selected into the study remains private. An adversary may therefore attempt to infer which individuals were included in the training graph. At one extreme, the adversary may have access to the full underlying graph structure and all node relationships. At the other extreme, the adversary may only observe node-level features without any knowledge of graph connectivity or neighborhood structure. In practice, however, realistic adversarial knowledge often lies somewhere in between these two extremes, where the adversary possesses only partial or noisy information about the neighborhood connectivity.

\subsection{Statistical Implications of Neighborhood-Based Input }
\label{sec:theory}
An important distinction in graph settings is that, due to the explicit use of neighborhood information in GNNs (as reflected in Equation~\eqref{eq:input}), train and test instances are not always statistically exchangeable. Existing results that relate empirically measured membership advantage to lower bounds on a model’s differential privacy budget \cite{yeom2018privacy} rely on this exchangeability assumption. As discussed by \cite{humphries2023investigating}, such guarantees require train and test instances to be drawn from the same underlying distribution and remain interchangeable under permutation without altering the joint distribution. In graph domains, however, node representations depend on shared edges and overlapping neighborhoods, introducing structural dependencies between instances and thereby violating statistical exchangeability.

To support our claim we begin by reproducing and adapting the definition of statistical exchangeability from \cite{humphries2023investigating} to the setting of message passing graph neural networks (GNNs) as studied in this work.

\begin{definition}[Statistical Exchangeability in MI for GNNs]
\label{def:stats}
Let \( \mathcal{D} \) denote a joint distribution over \( n \) member samples \( \{z_1, \ldots, z_n\} \) and a non-member sample \( z_{n+1} \), where each sample \( z_v \) corresponds to a tuple:
\[
z_v := (\mathbf{x}_v, y_v, \mathcal{N}^{L}_{\mathcal{G'}}(v), \{\mathbf{x}_u : u \in \mathcal{N}^{L}_{\mathcal{G}'}(v)\}),
\]
with \( \mathbf{x}_v \) and $y_v$ being the feature vector of node $v$ and its label respectively, and \( \mathcal{N}^{L}_{\mathcal{G}'}(v) \) denoting the $L$-hop neighborhood of \( v \) in a graph view \( \mathcal{G}' \). The graph \( \mathcal{G}' \) may correspond to the full graph \( \mathcal{G} \), or to a subgraph induced by the sampled training node set with edges included according to a specific sampling scheme such as random or snowball sampling.

We say that \( \mathcal{D} \) is \emph{statistically exchangeable} if the joint distribution over all samples \( \{z_1, \ldots, z_{n+1}\} \sim \mathcal{D} \) is invariant under permutations of sample indices. That is, for any permutation \( \sigma: [n+1] \rightarrow [n+1] \), it holds that:
\begin{equation}
    \Pr(z_1, \ldots, z_{n+1}) = \Pr(z_{\sigma(1)}, \ldots, z_{\sigma(n+1)}).
\end{equation}
\end{definition}
In the following, we show that statistical exchangeability cannot, in general, be guaranteed in graph learning settings with GNNs. In particular, this assumption is violated in inductive settings under both random and snowball sampling.
\begin{theorem}
\label{thm:non_exchangeability}
Let \( \mathcal{D} \) be a joint distribution over \( n \) member samples \( \{z_1, \ldots, z_n\} \) and a non-member sample \( z_{n+1} \), where each sample \( z_v \) is defined as in Definition~\ref{def:stats}, and where neighborhoods \( \mathcal{N}^L_{\mathcal{G}'}(v) \) are computed over a graph view \( \mathcal{G}' \) constructed via a node sampling scheme. Then \( \mathcal{D} \) is not always guaranteed to be statistically exchangeable if \( \mathcal{G}' \) is a subgraph constructed over the a sampled set of training nodes \( S \subset V \), and \( \mathcal{N}^L_{\mathcal{G}'}(v) \) is restricted to lie within \( S \). 

\end{theorem}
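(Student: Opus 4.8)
The plan is to establish non-exchangeability by violating a \emph{necessary} consequence of Definition~\ref{def:stats}: if $\mathcal{D}$ were statistically exchangeable, its law would be invariant under every permutation of indices, and in particular under the transposition $\tau=(1\ n{+}1)$ that swaps the first member $z_1$ with the non-member $z_{n+1}$. Invariance under $\tau$ forces the single-slot marginal of the member component to coincide with that of the non-member component. Since the theorem only asks to show exchangeability is \emph{not always} guaranteed, it suffices to exhibit one sampling scheme (and a minimal witness graph) under which these two marginals provably differ, so that no permutation invariance can hold.

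First I would isolate a scalar functional of the tuple that is measurable from $z_v$ alone and therefore constrained by exchangeability. The natural choice is the observed neighbourhood size $T(z_v)=|\mathcal{N}^L_{G'}(v)|$, or, more finely, the multiset of neighbour features $\{\mathbf{x}_u : u\in\mathcal{N}^L_{G'}(v)\}$. Exchangeability of $\mathcal{D}$ implies $T(z_1)$ and $T(z_{n+1})$ are identically distributed, so the argument reduces to showing that the law of $T$ under the member draw differs from its law under the non-member draw.

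For the snowball case I would invoke the sampled-inclusion analysis already established in the excerpt: the probability that a node enters the training set $S$ grows with its degree, so members are degree-biased while non-members are drawn from the complementary, degree-deficient portion of $V\setminus S$. Consequently the member observed-degree law stochastically dominates the non-member one, giving $T(z_1)\neq T(z_{n+1})$ in distribution. To make this transparent and avoid distributional hand-waving, I would pin it down on an explicit minimal graph---e.g.\ a hub-and-spoke star attached to a few low-degree peripheral nodes---where seed-then-expand sampling admits the hub as a member with high probability while the non-member is forced to be peripheral, yielding manifestly different degree distributions. For the random-sampling-plus-inductive case the asymmetry is structural rather than degree-based: by construction $G'$ is the subgraph induced on $S$ with $\mathcal{N}^L_{G'}(v)\subseteq S$, so a member's neighbourhood is realised inside $S$, whereas a non-member $v\in V\setminus S$ has no train--test edges available, so its neighbourhood within $S$ is empty (or must be drawn from a different graph view). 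Either way the neighbourhood component of a non-member cannot share the law of a member's, again breaking $T(z_1)\overset{d}{=}T(z_{n+1})$.

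The main obstacle I anticipate is the mild ambiguity in Definition~\ref{def:stats} about \emph{how the non-member's neighbourhood is generated} once it is required to ``lie within $S$'': a non-member is by definition outside $S$, so the requirement must be read either as forcing isolation or as silently switching graph views for the non-member slot. I would handle this by stating both readings explicitly and verifying non-exchangeability under each, so the conclusion is robust to the interpretation. The remaining care is purely in confirming that the two observed-degree (or neighbour-feature) laws genuinely differ rather than coinciding by accident; restricting to the explicit witness graph reduces this to a short finite computation, which is precisely why a single counterexample is enough to prove the ``not always guaranteed'' claim.
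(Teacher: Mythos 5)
Your proposal is sound and would establish the claim, but it takes a genuinely different route from the paper. The paper's proof fixes the realised graph view $G'$ and considers the transposition swapping a member $x_i$ with the non-member $x_{n+1}$: the permuted sequence $Z^\sigma$ carries neighbourhoods that are inconsistent with $G'$ (they would have to come from the induced subgraph $G''$ on $S'=S\setminus\{x_i\}\cup\{x_{n+1}\}$, which differs from $G'$ unless both swapped nodes are isolated), so $\Pr(Z^\sigma\mid G')=0$ while $\Pr(Z\mid G')>0$, and since $\Pr(G')>0$ the joint law is not permutation-invariant; for snowball sampling it further invokes the history-dependence of node inclusion to argue even the support of $Z^\sigma$ can change. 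You instead exploit the weaker necessary condition that exchangeability forces the single-slot marginals of the member and non-member components to coincide, and refute it by exhibiting a test statistic $T(z_v)=|\mathcal{N}^L_{G'}(v)|$ whose law differs between the two slots, pinned down on an explicit witness graph. Your route is logically cleaner in one respect --- it avoids the slightly delicate passage from a conditional statement ($\Pr(Z^\sigma\mid G')=0$) to the unconditional conclusion $\Pr(Z)\ne\Pr(Z^\sigma)$, reducing everything to a finite computation on a small graph --- and it connects directly to the degree-bias analysis the paper already develops for snowball sampling. The paper's argument buys something structural in return: it shows the incompatibility arises for essentially \emph{any} member/non-member swap (identifying the precise exception of mutually isolated nodes), rather than only in distributional aggregate, and it does not depend on choosing a witness graph. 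Two points of care for your version: (i) the degree-bias claim for snowball sampling is heuristic in the paper, so your plan to replace it with an explicit hub-and-spoke computation is not optional but necessary for rigour; and (ii) your handling of the ambiguity in how the non-member's neighbourhood is generated (forced isolation versus a switched graph view) is the right move and should be made explicit, since the counterexample construction differs slightly under the two readings.
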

\begin{proof}
We consider two inductive sampling schemes as studied in this work: random node sampling and snowball sampling.

\textbf{Case 1: Random Node Sampling.}
Let \( \mathcal{G}' \) denote the subgraph induced over the training node set \( S = \{x_1, \ldots, x_n\} \), and let the joint sample sequence be \( Z = (z_1, \ldots, z_n, z_{n+1}) \), where each \( z_i \) is defined as in Definition~\ref{def:stats}.

Now consider a permutation \( Z^\sigma = (z_{\sigma(1)}, \ldots, z_{\sigma(n+1)}) \) in which the non-member node \( x_{n+1} \) replaces some member node \( x_i \in S \). The resulting training set becomes:
\[
S' = \{x_1, \ldots, x_{i-1}, x_{n+1}, x_{i+1}, \ldots, x_n\},
\]
and we define the new induced subgraph as \( \mathcal{G}'' \). Since neighborhoods \( \mathcal{N}_\mathcal{G}^L(v) \) depend on the underlying graph, we have:
\[
\mathcal{N}_{\mathcal{G}'}^L(v) \ne \mathcal{N}_{\mathcal{G}''}^L(v) \quad \text{for some } v \in S \cup \{x_{n+1}\},
\]
unless both the exchanged nodes are isolated in the original graph $\mathcal{G}$.
Thus, the permuted tuple \( Z^\sigma \) becomes incompatible with the graph structure \( \mathcal{G}' \), and so:
\[
\Pr(Z^\sigma \mid \mathcal{G}') = 0, \quad \text{while} \quad \Pr(Z \mid \mathcal{G}') > 0.
\]
Since the graph view \( \mathcal{G}' \) is generated by the sampling process with non-zero probability, i.e., \( \Pr(\mathcal{G}') > 0 \), it follows that
the joint distribution is not invariant under permutation:
\[
\Pr(Z) \ne \Pr(Z^\sigma),
\]
violating statistical exchangeability.

\textbf{Case 2: Snowball Sampling.}
Let \( \mathcal{G}' \) be the subgraph generated using a snowball sampling process starting from a seed node set while sampling a fixed number of neighbors for each selected node in each iteration. In this setting, the set of sampled nodes and edges depends on the order in which nodes are traversed. 

Now consider a permutation \( Z^\sigma \) where the non-member node \( x_{n+1} \) replaces a member node \( x_i \). The resulting training set becomes:
\[
S' = \{x_1, \ldots, x_{i-1}, x_{n+1}, x_{i+1}, \ldots, x_n\}.
\]
Snowball sampling proceeds in layers by expanding neighbors of already sampled nodes. This means that whether a node is even included in the final subgraph depends on the specific sequence in which other nodes were included before it. Therefore, replacing \( x_i \) with \( x_{n+1} \) can not only alter neighborhoods but may also cause some nodes that were previously sampled to no longer appear in the new corresponding graph \( \mathcal{G}''\) , and vice versa.

As a result, the neighborhood \( \mathcal{N}_{\mathcal{G}''}^L(v) \) and even the support of \( Z^\sigma \) may differ fundamentally (in terms of node features) from that of \( Z \), and under a fixed \( \mathcal{G}' \), we have:
\[
\Pr(Z^\sigma \mid \mathcal{G}') = 0, \quad \text{while} \quad \Pr(Z \mid \mathcal{G}') > 0.
\]
Since the graph view \( \mathcal{G}' \) is generated by the sampling process with non-zero probability, i.e., \( \Pr(\mathcal{G}') > 0 \), it follows that
the joint distribution is not invariant under permutation:
\[
\Pr(Z) \ne \Pr(Z^\sigma),
\]
violating statistical exchangeability.
Thus, snowball sampling also violates statistical exchangeability due to its sequential and history-dependent nature.
\end{proof}

From the above discussion, we conclude that in inductive train-test splits, where the neighborhood structure of the training graph is determined by the set of sampled training nodes, statistical exchangeability between training and test samples cannot be always guaranteed. In contrast, in transductive settings, where the full graph $\mathcal{G}$ is accessible and fixed during training thus decoupling neighborhood structure from the sampled training nodes, statistical exchangeability holds when the training nodes are sampled randomly.

However, if snowball sampling is used to select training nodes, even when the full graph $\mathcal{G}$ is utilized during training, statistical exchangeability is still violated. This is because node inclusion under snowball sampling depends on the initial seed set and the traversal order, making the set of training nodes itself order-dependent. In other words, replacing even a single node in the sampled training set can make the resulting set structurally implausible under the sampling process.


\section{Attack Model and Datasets }  
\label{sec:sampling}
We now describe the attack model and the datasets 
employed in our experimental evaluation.
\subsection{Attack Model}
\label{sec:attack}
Existing works \cite{duddu2020quantifying,olatunji2021membership,he2021node} typically employ a shadow model based \cite{shokri2017membership} strategy to build the attack model. In particular, a shadow model is trained to mimic the target model. The shadow model employs a shadow dataset usually assumed to be sampled from the same distribution as the training dataset of the target model. The attack model is designed as a binary classification model, which maps the output prediction probabilities of the shadow model on the shadow dataset $\mathcal{D}_{shadow}$ to the membership of the corresponding input member and non-member nodes. 

To isolate the impact of the shadow dataset quality and different shadow model training methods from the influence of graph structure on the performance of membership inference (MI), we entirely skip the step of building the shadow model. Instead, we directly train the attack model using the membership status of $r$ fraction of both the true member and non-member instances to the attackers, where $r\in \{0.1,0.2,0.5,0.8\}$, and evaluate membership prediction on the remaining nodes.

We employ a 2-layer multilayer perceptron with ReLU activation as our attack model. The input to the attack model is the output prediction vector of the target model corresponding to the query node concatenated with the cross entropy loss value computed over the query node. The attack model outputs the  probability that the queried node belongs to the training graph. Formally, let \( \mathbf{p}(v) \) represent the output prediction vector  for the query node \( v \), and let \( \mathcal{L}(v) \) denote the cross-entropy loss computed over \( v \). The input to the attack model corresponding to node $v$ is the concatenation of these two values, which can be written as:
\[
\mathbf{x}_{\text{attack}} = [\mathbf{p}(v) \, | \, \mathcal{L}(v)].
\]

\textbf{Remark 1.} Our goal is \emph{not} to propose a new attack but to quantify how \emph{edge structure}
(employed during training-graph construction and inference-time edge access) affects membership advantage.
The absolute success rate of any attack may change with attacker strength, model choices
or defences (e.g., DP, calibration). Our focus, however, is on the \emph{modulation} induced by edge
structure. For this purpose, our setup using true membership for a large amount of data suffices
to reveal the direction and relative magnitude of these edge-driven effects.

\textbf{Remark 2.} With slight abuse of terminology and for brevity we interchangeably refer to the prediction vector output by GNN for a node query as the \textbf{node's posterior}.

\subsection{Datasets}
We perform our investigations on three popular graph datasets, namely \cora \cite{sen2008collective}, \chameleon \cite{rozemberczki2019multiscale} and \pubmed \cite{sen2008collective} datasets. \cora and \pubmed are citation datasets where each
research article is a node, and there exists an edge between two
articles if one article cites the other. Each node has a label that shows
the article category. In \cora, the features of each node are represented by a 0/1-valued word vector, which indicates the word's presence or absence from the article's abstract. In \pubmed, the node features
are represented by the TF/IDF weighted word vector of the unique
words in the dictionary.

\chameleon is a Wikipedia network dataset. Nodes represent articles from the English Wikipedia, edges reflect mutual links between them. Node features indicate the presence of particular nouns in the articles. The nodes were classified into 5 classes in terms of their average monthly traffic.
All datasets are employed for the task of multi-class node classification. 

The statistics of the datasets are provided in 
Table \ref{tab:data-stat}. Here, we compute average label homophily which measures how often adjacent nodes share the same label.
Specifically, for each node $v$ with degree at least $1$ we compute its label homophily as the fraction of its 1-hop neighbors which has the same label as $v$. We then compute average label homophily of the dataset (presented under \textsc{Homophily} in Table \ref{tab:data-stat}) as the average over label homophily of all non-isolated nodes. 
High homophily corresponds to connected nodes tending to share the same labels, as observed in datasets such as \cora and \pubmed, whereas in heterophilic datasets such as \chameleon, connected nodes tend to have dissimilar labels.

\begin{table}
\caption{Dataset statistics. $|\mathcal{V}|$ and $|\mathcal{E}|$ denote the number of nodes and (undirected) edges respectively, $|\mathcal{C}|$ is the number of classes, $d$ is the dimension of the feature vector. Under \textsc{Homophily} we present the average of label homophily of all  non-isolated nodes in the dataset.}
\label{tab:data-stat}
\begin{tabular}{lccccc}
\toprule
 &$|\mathcal{V}|$ & $|\mathcal{E}|$ & $|\mathcal{C}|$ & $d$ & \textsc{Homophily}\\\midrule
\cora & $2,708$ & $5278$ & $7$ & $1,433$& $0.8252$ \\
\chameleon & $2,277$ & $31,371$ & $5$ & $2,325$& $0.2471$\\
\pubmed &$19,717$ & $44,324$ & $3$ & $500$ & $0.7924$\\
\bottomrule
\end{tabular}
\end{table}

\subsection{Settings for Generating Training Splits.} \label{para:settings} We sampled 5 train graphs  each using snowball sampling and random sampling. For all datasets for all splits we fix the number of nodes in the train set to be 10\%  and 50\% of the total nodes. In addition to the usual 50\% split size adopted in evaluations in existing works we chose to include the 10\% split size to simulate realistic settings where the number of member data points (training nodes) is typically much smaller than the overall population.
For snowball sampling we build the initial node set $\mathcal{V}^{(0)}$ by selecting 10 nodes from each class randomly. We fix $k=3$ (number of maximum neighbors to be sampled from each selected node) for all datasets.

In Table \ref{tab:train-degree-10-50} we provide mean of  node degree over the train splits and the corresponding standard deviation as well as the average degree of the original graph.

\begin{table}[ht]
\centering
\setlength{\tabcolsep}{3.5pt}
\caption{Average degree of \emph{train graphs} when $10\%$ or $50\%$ of nodes are sampled using \textsc{snowball} ($k{=}3$) or \textsc{random} and the underlying full graph without any sampling.  Values are mean $\pm$ std over $5$ splits.}
\label{tab:train-degree-10-50}
\resizebox{\columnwidth}{!}{%
\begin{tabular}{@{}>{\centering\arraybackslash}p{8mm} l c c c@{}}
\toprule
{} & \textbf{Sampling Type} & \makecell{\cora} & \makecell{\textsc{Chameleon}} & \makecell{\pubmed} \\
\midrule
\multirow{2}{*}{\rotatebox[origin=c]{90}{10\%}}
& \textsc{Snowball}   & $1.68 \pm 0.046$ & $1.93 \pm 0.052$ & $2.55 \pm 0.058$ \\
& \textsc{Random}     & $0.32 \pm 0.063$ & $2.79 \pm 0.495$  & $0.41 \pm 0.034$ \\
\midrule
\multirow{2}{*}{\rotatebox[origin=c]{90}{50\%}}
& \textsc{Snowball}   & $3.18 \pm 0.051$ & $4.91 \pm 0.143$  & $3.55 \pm 0.013$ \\
& \textsc{Random}     & $2.00 \pm 0.111$ & $13.52 \pm 0.741$ & $2.21 \pm 0.029$ \\
\midrule
\multicolumn{1}{c}{} & \textsc{Full Graph} & $3.90 \pm 0.00$ & $27.55 \pm 0.00$ & $4.496 \pm 0.00$ \\
\bottomrule
\end{tabular}%
}
\end{table}

\section{Influence of Sampling Strategy and Edge Access}
\label{sec:samplingInf}
Before quantifying their effect on membership advantage (Sec.~\ref{sec:gs_ma}), we \emph{motivate and justify our design choice} to vary two graph–specific factors \emph{training-graph sampling} and \emph{inference-time edge access} by systematically characterizing their impact on (i) the properties of sampled nodes, (ii) the distribution of model outputs, and (iii) distribution of true class confidence scores.

\subsection{Characteristics of Sampled Nodes}
\label{sec:sampled_nodes}
From Table \ref{tab:train-degree-10-50} we observe that snowball sampling leads to higher connectivity, except in the case of chameleon, where random samples have higher edge density. The reason here is that under snowball sampling, we impose (in our experiments) a cap on the maximum number of sampled neighbors, which truncates the neighborhoods of high-degree nodes. This limits the resulting edge density. In contrast, random node sampling does not impose such a restriction, so the average degree of the sampled subgraph can more directly reflect the high average degree of the original graph. In general, as we argue below snowball sampling biases the sample toward high-degree nodes, which affects both how representative the sample is and how GNN models behave.

With uniform random sampling, as studied in the current work, each node in the original graph is included in the sample independently with equal probability $p$. This implies that high-degree and low-degree nodes are sampled with equal likelihood. However, in the induced subgraph formed by the sampled nodes, high-degree nodes tend to appear with lower observed degrees. 
 Specifically, a node of degree $\ell$ in the original graph will retain only those edges that connect to other sampled nodes. The probability that exactly $k$ of its $\ell$ neighbors are also sampled follows a binomial distribution with parameters $\ell$ and $p$. Let $P_\ell$ denote the probability that a node has degree $\ell$ in the original graph, and let $q_k$ denote the probability that a node has degree $k$ in the sampled graph. Then, the degree distribution in the sampled graph is given by:
\begin{align}
    q_k = \sum_{\ell=k}^{\infty} \binom{\ell}{k} p^k (1 - p)^{\ell - k} P_\ell,
\end{align}
This expression accounts for all nodes of degree $\ell \geq k$ in the original graph and computes the probability that exactly $k$ of their neighbors are also sampled. As $p$ decreases, the probability of retaining a high fraction of neighbors drops, leading to a shift in the degree distribution toward lower values.

In contrast, snowball sampling where sampling proceeds by recursively including neighbors of already sampled nodes tends to favor high-degree nodes. This is because such nodes have more opportunities to be reached during the expansion process, and once included, are more likely to bring in additional neighbors. 
To mitigate the rapid expansion caused by high-degree nodes, which, in snowball sampling, can otherwise concentrate the sampled subgraph in very small regions of the graph—we cap the number of neighbors sampled from each node. This constraint also helps control the average degree of the resulting subgraph, keeping it close to a predefined threshold. Nevertheless, the \textbf{probability of selecting a node in the training set increases with its degree} under snowball sampling, whereas it remains uniform under random sampling. An adversary who knows the sampling mechanism can exploit this asymmetry by assigning higher prior membership probabilities to high-degree nodes, even before observing the model output. Moreover, snowball-sampled training graphs tend to concentrate on specific regions of the graph determined by the reachability of the initial seed nodes. This restricted coverage of the graph, as also observed in our experiments, increases the gap between the model’s behavior on member and non-member nodes.


\subsection{Node Neighborhoods and Output Distributions}
\label{sec:node_neighborhoods}
In addition to the influence of the edge structure during training, the output distributions of a GNN for the same set of query nodes can vary at inference time, even when the model parameters are fixed. This variation arises from the inherent \textit{message-passing} or \textit{aggregation} operation in GNNs (cf. Eq.~\eqref{eq:aggre}), which aggregates features from the node's $L$-hop neighborhood. We consider three types of graph access at inference time:

\begin{itemize}[leftmargin=*]
\item \orig: Only the original train edges and test edges are used; train and test sets are edge-disjoint.
\item \alledges: All edges from the underlying graph—including edges between train and test nodes—are used during inference.
\item \nograph: The graph structure is not used at all during inference.
\end{itemize}

Let $\Phi$ be a fixed trained GNN model, and let $v$ be a query node. Let $\mathcal{N}^L_{\mathcal{G}_1}(v)$ and $\mathcal{N}^L_{\mathcal{G}_2}(v)$ denote the $L$-hop neighborhoods of node $v$ in two graphs $\mathcal{G}_1$ and $\mathcal{G}_2$ that differ in edge structure (e.g., different sampling strategies or inference-time neighborhood truncation). Then, the model's output for $v$ under these two neighborhoods is given by:
\[
\mathbf{y}_v^{(1)} := \Phi(v; \mathcal{N}^L_{\mathcal{G}_1}(v)) \quad \text{and} \quad 
\mathbf{y}_v^{(2)} := \Phi(v; \mathcal{N}^L_{\mathcal{G}_2}(v)).
\]
We compute the KL-divergence between the two prediction distributions to capture the discrepancy in the model's output for the same node under different neighborhood contexts:
\[
\mathcal{D}(v; \mathcal{G}_1, \mathcal{G}_2) := D_{KL}(\mathbf{y}_v^{(1)}|| \mathbf{y}_v^{(2)}).
\]
\begin{figure}
    \centering
    \includegraphics[width=0.8\linewidth]{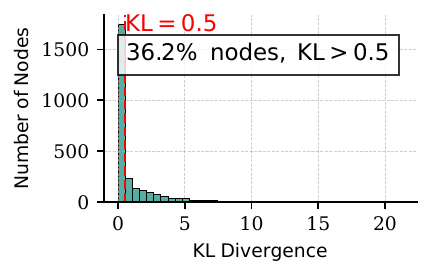}
    \caption{Distribution of KL divergence among posterior distribution of nodes on \cora dataset comparing the cases when all edges and none of the edges were used during inference. The model (\gcn) was trained on a {\color{blue}{snowball}} sampled split with 50\% of the nodes in the train set.}
    \label{fig:cora_kl_snowball}
\end{figure}
\begin{figure}
    \centering
    \includegraphics[width=0.8\linewidth]{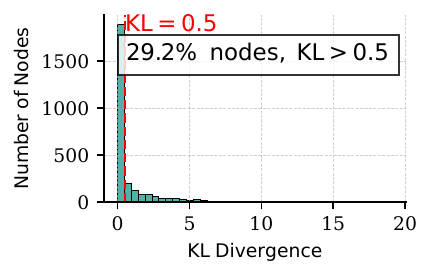}
    \caption{Distribution of KL divergence among posterior distribution of nodes on \cora dataset comparing the cases when all edges and none of the edges were used during inference. The model (\gcn) was trained on a {\color{blue}{randomly}} sampled split with 50\% of the nodes in the train set.}
    \label{fig:cora_kl_random}
\end{figure}
This divergence can be non-zero even for nodes in the training set, highlighting that inference-time neighborhood construction can directly and significantly affect the resulting node representations and, consequently, the model’s predictions. An extreme illustration of this occurs when all neighbors are included during inference versus when all edges are ignored resulting in entirely different aggregation contexts and hence different predictions. In Figures~\ref{fig:cora_kl_snowball} and~\ref{fig:cora_kl_random}, we show how the output posterior distribution, that is, the predicted probability vector, changes for the same nodes and the same model when only the neighborhood structure is varied. The large differences in model posteriors for the same nodes imply that attack success rates can vary substantially for those nodes depending on the adversary’s knowledge of the edge structure.

\subsection{Distribution of True Class Confidence}

For each node, we map the prediction confidence \(p\in(0,1)\) to
the \emph{log-odds} (logit) 
\[
z \;=\; \operatorname{logit}(p) \;=\; \log\!\left(\frac{p}{1-p}\right).
\]
This places probabilities on an unbounded, symmetric scale
(\(z=0\) at \(p=0.5\); \(z>0\) iff \(p>0.5\)), which spreads out extreme
values near \(0/1\) and facilitates distributional comparisons \cite{carlini2022membership}.
To avoid infinities at \(p\in\{0,1\}\), we clamp
\(p \leftarrow \min\!\big(\max(p,\varepsilon),\,1-\varepsilon\big)\)
with \(\varepsilon=10^{-6}\) before applying the transform.
(The inverse mapping is the sigmoid \(p=\sigma(z)=1/(1+e^{-z})\).) 

In Figure \ref{fig:classconfidence} we plot the distribution for logit transformed true class confidence scores for \cora with \sage under different edge access settings. Note that the trained model as well as the membership and non-membership status of nodes is fixed according to the original split. Only edge access is changed during inference to obtain prediction scores. We observe that the scores provide greater distinguishability when edges from the original split are included at inference compared to the other settings. 

\begin{figure*}
   \centering\includegraphics[width=0.9\linewidth]{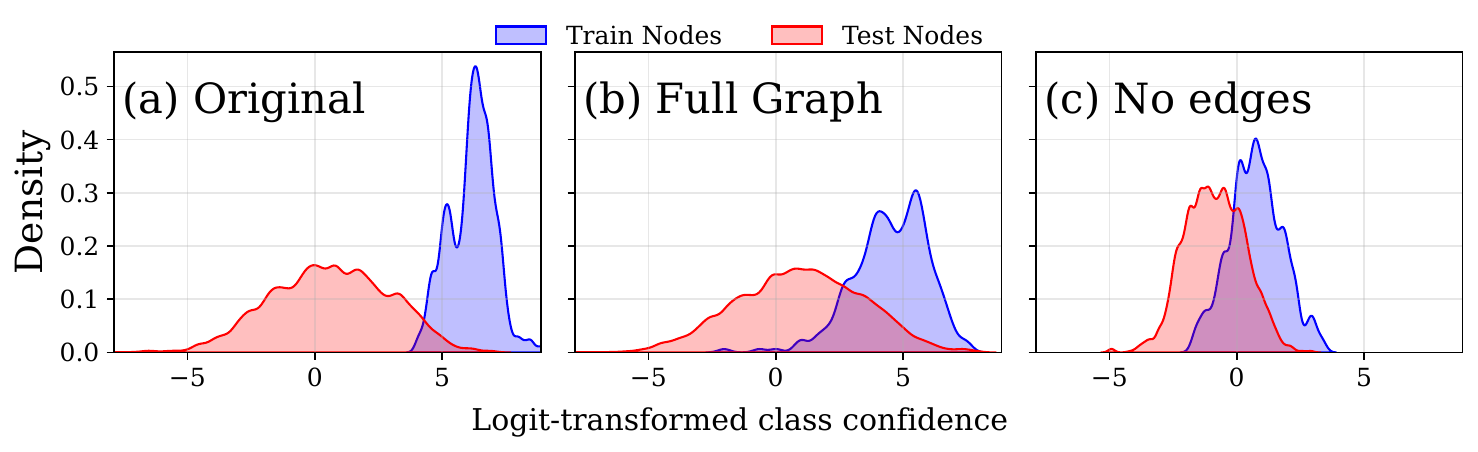}
  \caption{Effect of edge structure on train--test separability of class confidence
(log-odds of the true-class prediction probability) for a \sage model. The model was trained over \cora dataset with 10\% of nodes sampled for training using \textbf{snowball} sampling. 
We compare three inference graphs: (a) \emph{Original} (train/test edge sets disjoint),
(b) \emph{Full graph} (all edges available), and (c) \emph{No edges}.
Train (blue) and test (red) distributions are most separated in (a),
indicating stronger distinguishability; in (b) and (c) they overlap more,
indicating weaker distinguishability. }
\label{fig:classconfidence}
\end{figure*}

\section{Graph Structure and Membership Advantage}
\label{sec:gs_ma}
After presenting qualitative discussion and preliminary evidence in favour of our argument to study the role of structure we now present large-scale quantitative results on how edge structure impacts membership advantage in GNNs. We begin by analyzing the effect of \emph{training-graph construction} and \emph{inference-time edge access}, on predictive performance (Section~\ref{sec:gs-mp}), and then study how the resulting train–test performance gap correlates with membership advantage (Section~\ref{sec:pgma}).

\subsection{Experimental Setup} 
\label{sec:exp}
For each dataset we perform our experiments on 5 training splits constructed using snowball sampling and random sampling as described in Section \ref{sec:sampling}.  The attack model is a 2-layer MLP trained on an $r$ fraction of member and non-member data points with their true labels. By varying \(r \in \{0.1, 0.2, 0.5, 0.8\}\), we construct attack models with different levels of adversarial knowledge, allowing us to study whether the structural effects observed in our investigation remain consistent under different attacker strengths. For each of the training-test data split of the target model, we generate 3 random splits for each of the attack models. The attack results corresponding to each $r$ are then summarized over a total of 15 runs for each dataset. We employ \gcn \cite{kipf2016semi}, \gat \cite{velivckovic2017graph} and \sage \cite{hamilton2017inductive} as the GNN models. All GNN models are evaluated with exactly the same data splits. Hyperparameter details of our implementation are provided in Appendix \ref{sec:imp}.

 We compute the performance gap as the percentage decrease in test accuracy relative to the training accuracy, i.e.,
$$ \Delta_{gen} = {ACC_{train}-ACC_{test} \over ACC_{train} }\times 100.$$ We employ performance gap between the train and test sets to approximate generalizaton gap. We consider three types of graph access at inference time:

\begin{itemize}[leftmargin=*]
\item \orig: Only the original train edges and test edges are used; train and test sets are edge-disjoint.
\item \alledges: All edges from the underlying graph—including edges between train and test nodes—are used during inference.
\item \nograph: All edge connections, that is, the graph structure is completely discarded during inference.
\end{itemize}

\subsection{How do Graph Sampling and Edge Access Affect Performance Gap?}
\label{sec:gs-mp}
\begin{figure}
    \centering
    \includegraphics[width=1\linewidth]{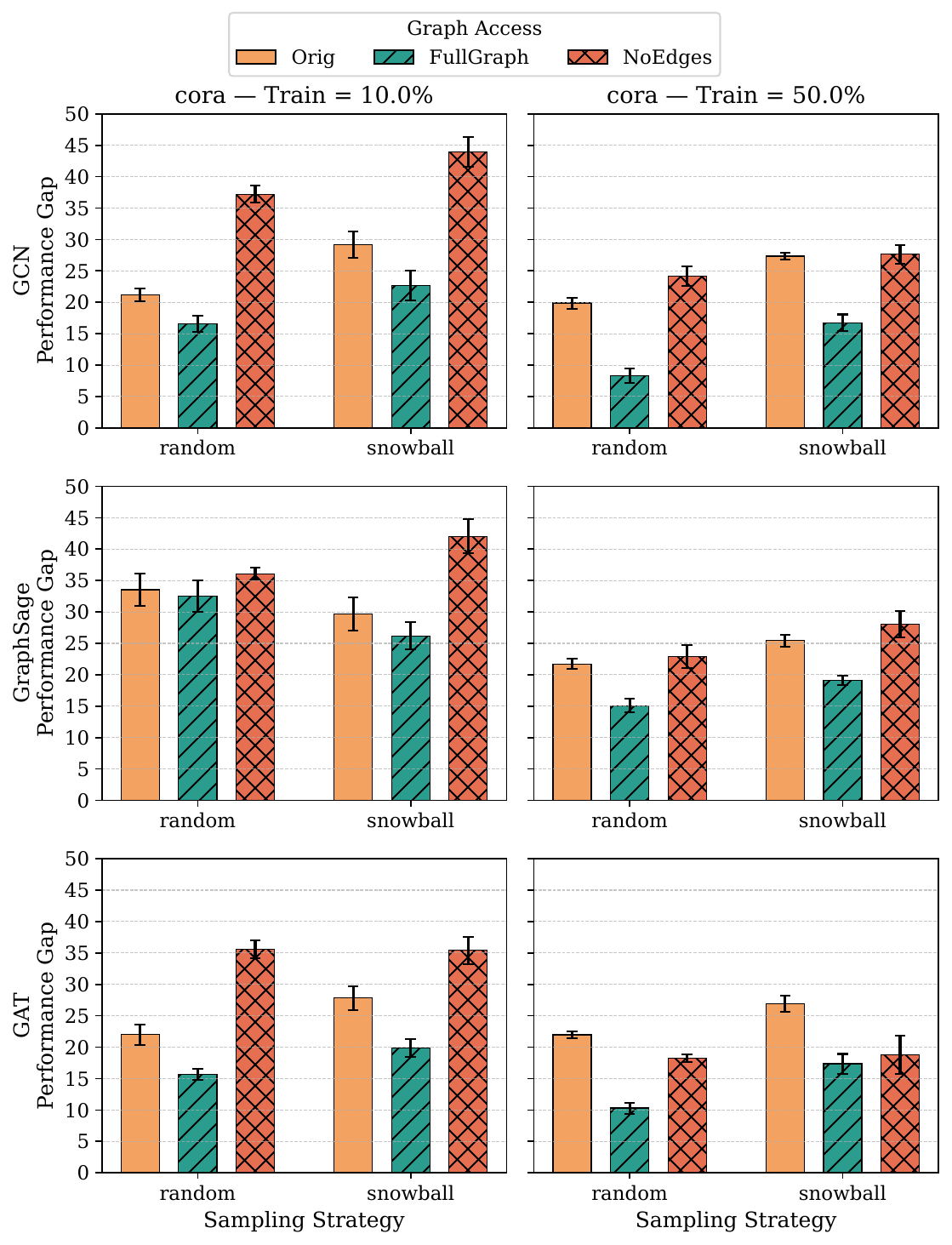}
    \caption{Performance gap (in \%) under different edge access settings for \cora. The left plot corresponds to a training size of 10\% of nodes, while the right plot corresponds to a training size of 50\% of nodes.}
    \label{fig:corapg}
\end{figure}
\begin{figure}
    \centering
    \includegraphics[width=1\linewidth]{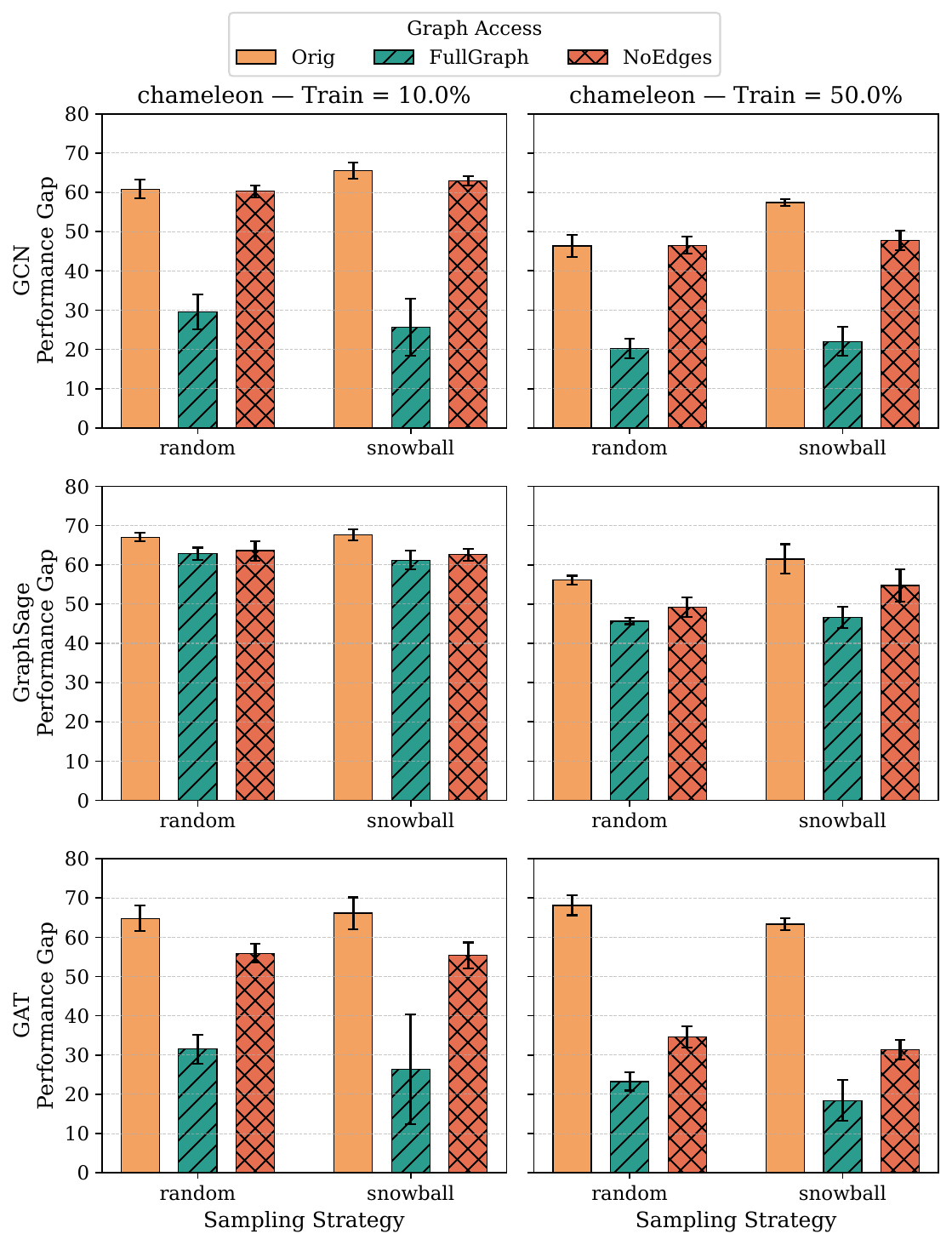}
    \caption{Performance gap (in \%) under different edge access settings for \chameleon. Left plot corresponds to a training size of 10\% of nodes, while the right plot corresponds to a training size of 50\% of nodes.}
    \label{fig:chameleonpg}
\end{figure}
\begin{figure}[h!]
    \centering
    \includegraphics[width=1\linewidth]{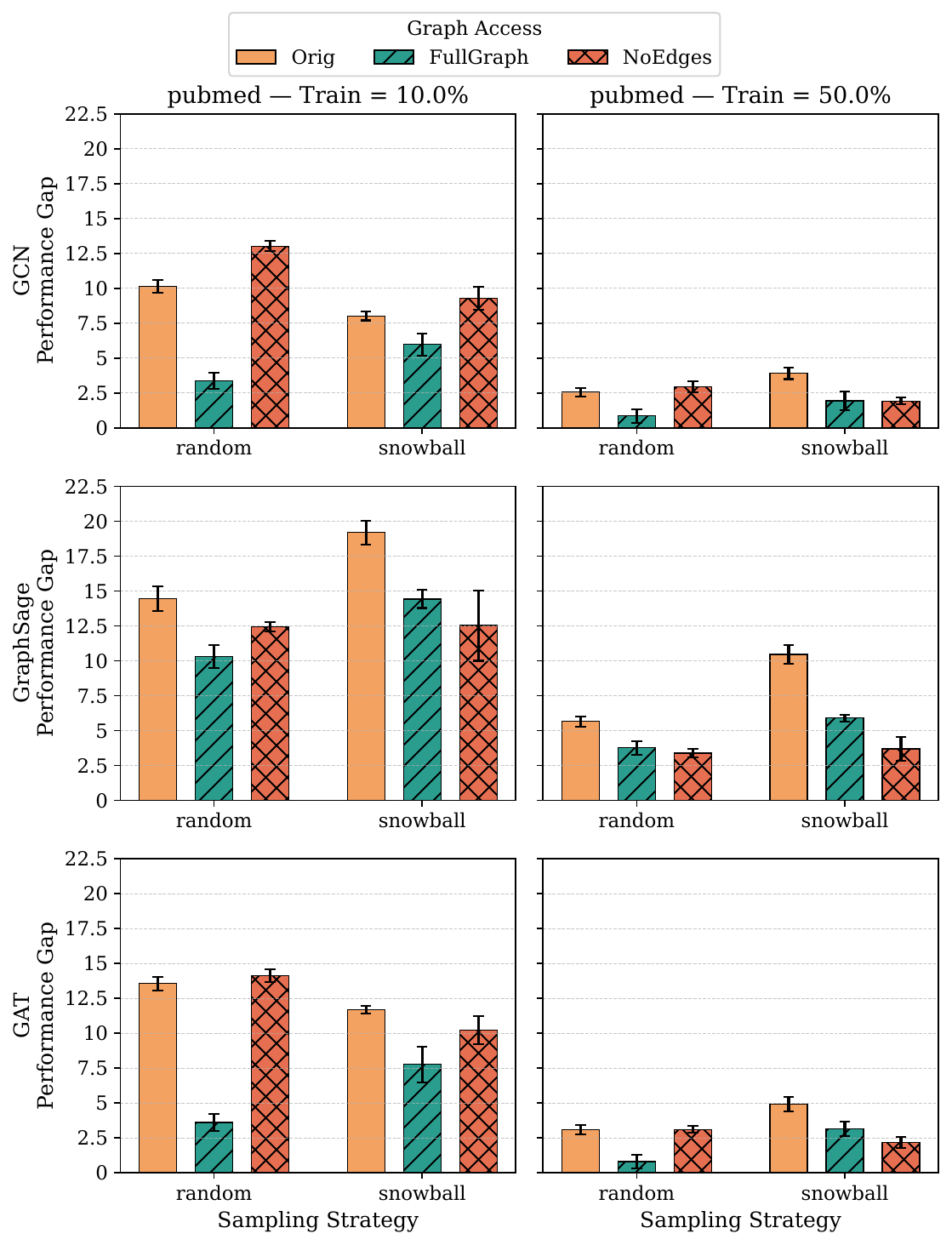}
    \caption{Performance gap (in \%) under different edge access settings for \pubmed. The left plot corresponds to a training size of 10\% of nodes, while the right plot corresponds to a training size of 50\% of nodes.}
    \label{fig:pubmedpg}
\end{figure}

In figures \ref{fig:corapg}, \ref{fig:chameleonpg}, \ref{fig:pubmedpg} we compare the performance gap of the three GNNs under two train-graph construction strategies and the different edge access levels for \cora, \chameleon and \pubmed datasets. Detailed train and test accuracy scores for all datasets are presented in Tables
\ref{tab:performance_cora_train10}--\ref{tab:performance_cora_train50} (\cora), \ref{tab:performance_chameleon_train10}--\ref{tab:performance_chameleon_train50} (\chameleon), and
\ref{tab:performance_pubmed_train10}--\ref{tab:performance_pubmed_train50} (\pubmed) in Appendix \ref{sec:testtrainaccu}.

\subsubsection{Effect of Sampling Type}
When comparing different sampling strategies for constructing the training graph, we would expect higher performance gap when train graphs are snowball sampled. The rational behind this is that with random sampling, despite disrupting the graph structure, we obtain a i.i.d. node distribution. In contrast, snowball sampling introduces bias by expanding from a seed set, increasing the likelihood of sampling nodes that are structurally similar or closely connected. While the expected trend is observed over all datasets there are a few exceptions when $10\%$ of nodes were used for training. Overall the observed performance gaps are highest for low homophilic dataset \chameleon and lowest for \pubmed. 

\subsubsection{Effect of Edge Access} The performance gap varies depending on the level of graph access. The gap is usually smallest when all edges are used highlighting the benefit of including inter-train-test edges in improving test-time predictions. For \chameleon, although the performance gap is lower in the \alledges setting, both train and test accuracies decrease for \gcn and \gat (with \sage as the main exception). This is expected in a low-homophily graph, where neighboring nodes are less likely to share labels: adding more edges can therefore hurt the message-passing behavior of \gcn and \gat, which implicitly smooth representations across adjacent nodes. The case of \sage is particularly interesting, as the accuracy on training nodes changes little across edge-access settings. We hypothesize that this is due to its aggregation mechanism, which concatenates the ego-node representation with the aggregated neighbor representation rather than averaging them together. As we show later, this distinction also has important consequences for privacy leakage when we examine membership advantage across these settings. We also see some exceptions in \pubmed where in \alledges setting the performance gap is sometimes higher than that of \nograph setting. At the moment we do not have sound explanation for the phenomena. What is important, however, is that different assumptions about edge access at inference time do induce different levels of performance gap, which in turn can lead to different levels of privacy leakage.

\subsubsection{Effect of Sampling Size
} When the number of training nodes is increased from 10\% to 50\% a decrease in performance gap as expected is observed (with minor exceptions) for all models at different graph access levels and train sampling strategies.

\begin{figure*}[h!]
    \centering
    \includegraphics[width=0.9\linewidth]{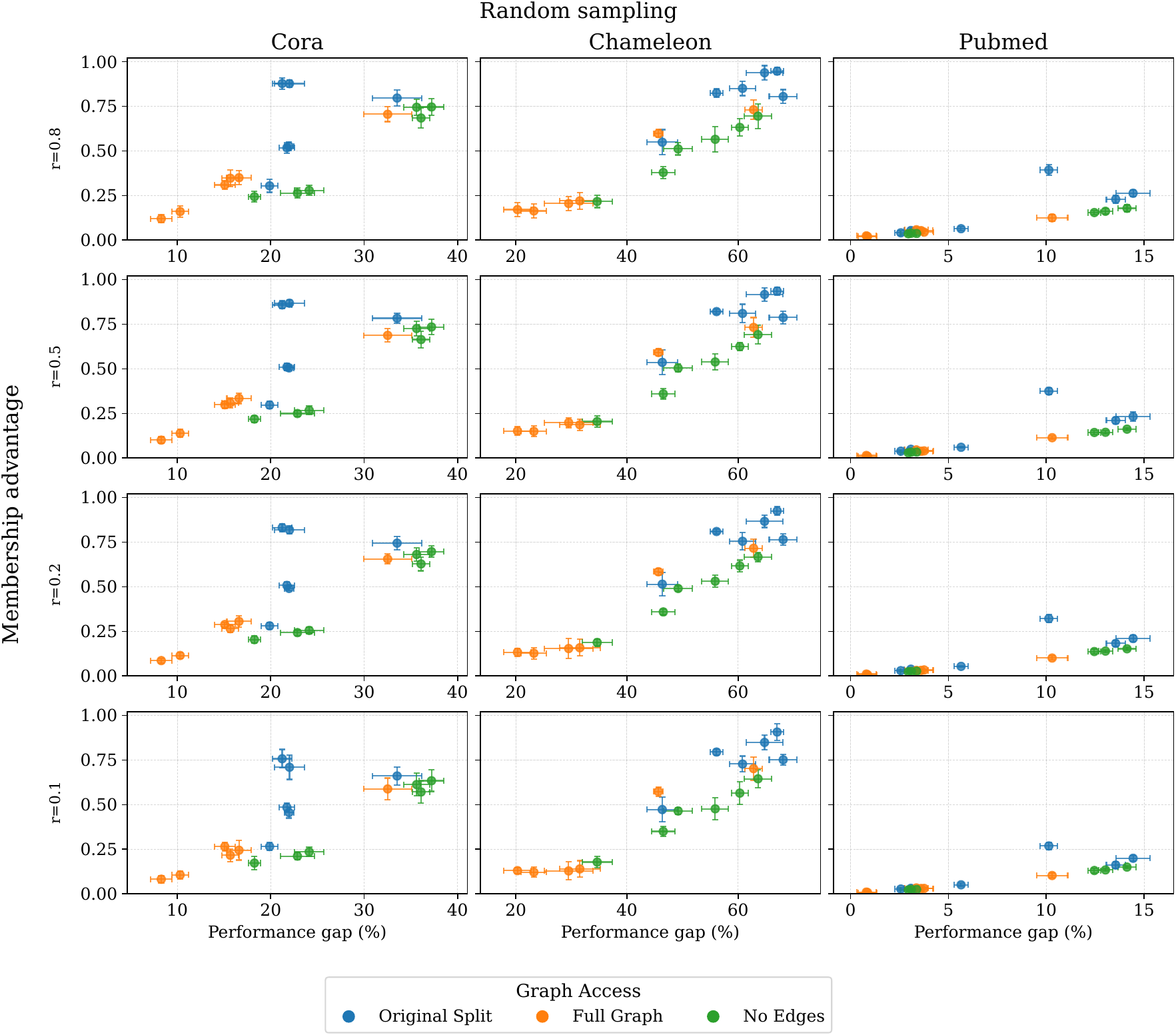}
    \caption{Scatter plot of performance gap versus membership advantage under {\color{blue}{random sampling}} shown for different attack-model training fractions $r\in\{0.1,0.2,0.5,0.8 \}$. Each row corresponds to one value of $r$, and points are grouped by edge-access setting (Original Split, Full Graph, and No Edges). Each point shows the mean membership advantage at a given mean performance gap, while horizontal and vertical error bars denote one standard deviation in performance gap and membership advantage, respectively.}
    \label{fig:perfmarandom}
\end{figure*}

\begin{figure*}
    \centering
    \includegraphics[width=0.9\linewidth]{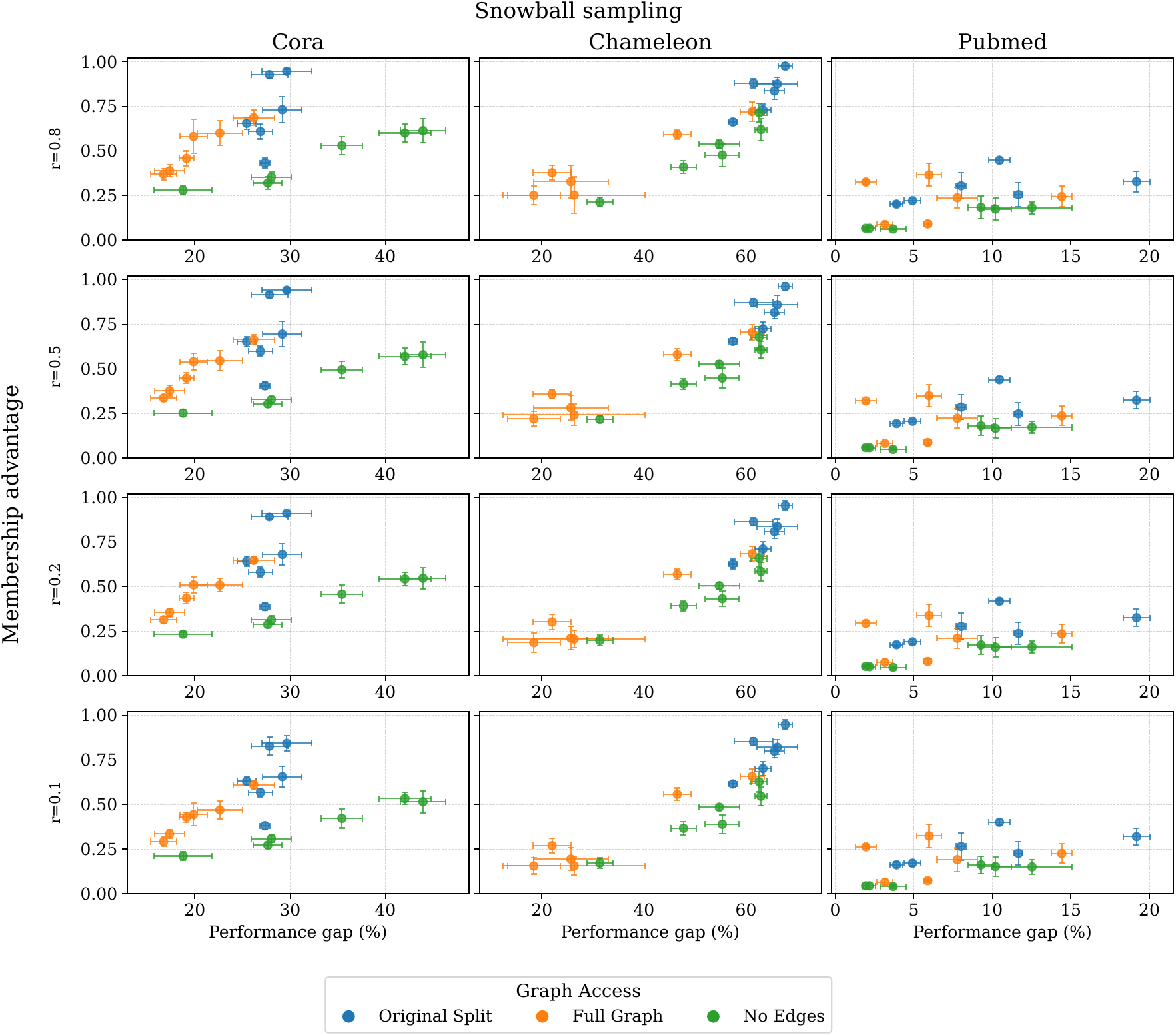}
\caption{Scatter plot of performance gap versus membership advantage under {\color{blue}{snowball sampling}} shown for different attack-model training fractions $r\in\{0.1,0.2,0.5,0.8 \}$. Points and error bars have the same meaning as in the corresponding plot for random sampling.}
    \label{fig:perfmasnowball}
\end{figure*}

\subsection{How does Performance Gap Translate to Membership Advantage?}
\label{sec:pgma}
Figures~\ref{fig:perfmarandom} and \ref{fig:perfmasnowball} show scatter plots of performance gap versus membership advantage, aggregated over all models and settings, for the two sampling techniques. Each row corresponds to a different attacker strength parameter \(r \in \{0.1, 0.2, 0.5, 0.8\}\), representing the fraction of member and non-member nodes whose true membership labels are revealed to the attack model during training. Points in these plots are grouped by edge-access setting (Original Split, Full Graph, and No Edges). Each point shows the mean membership advantage at a given mean performance gap, while horizontal and vertical error bars denote one standard deviation in performance gap and membership advantage, respectively. We provide detailed performance-gap and membership-advantage scores for all datasets in Tables
\ref{tab:gap_adv_cora_train10_r10}--\ref{tab:gap_adv_cora_train50_r80} (\cora),
\ref{tab:gap_adv_chameleon_train10_r10}--\ref{tab:gap_adv_chameleon_train50_r80} (\chameleon), and
\ref{tab:gap_adv_pubmed_train10_r10}--\ref{tab:gap_adv_pubmed_train50_r80} (\pubmed) in Appendix \ref{sec:testtrainaccu}.

We observe that across all values of \(r\), the relative ordering of the different graph-access settings and the overall relationship between performance gap and membership advantage remain largely stable, with only isolated exceptions. In addition, membership advantage tends to increase with \(r\), especially when viewed in aggregate across settings. We therefore provide the following analysis without referring to any specific value of \(r\).

First, from the scatter plots, we observe that in terms of membership advantage the original split setting dominates. Specifically, we observe that even when No Edges setting  exhibits a comparable or even larger performance gap than the Original Split setting, it nevertheless leads to lower membership leakage.

Second, from the detailed scores we observe that in most cases, particularly for \gcn and \gat, moving from the Original Split to the Full Graph leads to a sharp drop in membership advantage. This trend is observed on both \cora and \chameleon, suggesting that access to the full graph often makes member and non-member nodes less distinguishable. The train and test accuracies provide further insight into this effect. On \cora, using all graph edges increases test accuracy, while on \chameleon, using all graph edges causes a substantial decrease in train accuracy and, in most cases, a small increase in test accuracy,  thereby reducing the performance gap. This is consistent with the fact that \chameleon is a low-homophily graph, where adding more edges can degrade performance. Even so, message passing still creates a mixing effect that blurs the distinction between member and non-member nodes. 

Third, the differences in membership advantage across different edge-access settings are often smaller for \sage. We hypothesize that this stems from \textsc{GraphSage}'s  separate treatment of self-node and neighborhood information, which can preserve more node-specific signal and make \sage\ less sensitive to biased neighbor sets than \gcn\ or \gat, which more directly mix node and neighbor messages.

Fourth, we discuss the special case of \pubmed. Across both train-set sizes and sampling strategies, train and test accuracies remain relatively high, and the resulting membership advantage is comparatively low. This is broadly consistent with prior observations in the literature regarding the relative difficulty of attacking this dataset. A notable departure from the pattern seen in \cora\ and \chameleon\ concerns the relationship between performance gap and membership advantage in the \alledges setting. On \pubmed, using the full graph does not suppress leakage as consistently as it does on \cora. In particular, for \gcn\ under Snowball sampling, \alledges setting yields membership advantage that is comparable to, and in some cases higher than, that of the \orig setting, even though the corresponding performance gap is lower.

Last, when comparing snowball and random sampling, we generally observe higher membership advantage under snowball sampling. This effect is most pronounced on \pubmed, where, despite the overall membership advantage being relatively low, snowball sampling still consistently exhibits higher membership advantage compared to random sampling. A likely explanation is the coverage bias introduced by snowball sampling, which concentrates the sampled training graph within specific structural regions and repeatedly exposes the model to similar neighborhood patterns, thereby increasing behavioral differences between member and non-member nodes.

One important subtlety we observe is that for the homophilic graphs \cora and \pubmed, the \alledges setting can exhibit higher membership advantage than the \nograph setting, even when the former achieves a lower performance gap. This behavior contrasts with the random sampling setting, where \alledges often leads to the lowest membership advantage.



\textbf{Summary.} Overall our experimental results point to the fact that the edge structure is a crucial variable for understanding and quantifying privacy leakage. Alone performance or generalization gap does not suffice to explain the information leakage in graph based models. 
For realistic evaluation of privacy leakage in GNNs we need to devise new and realistic assumptions on the available graph structure to the adversary. To develop more intelligent attacks new edge manipulation strategies would need to be developed that can help attacker to behave differently on member and non-member query nodes.

\section{Related Work}
\label{sec:related}

Membership inference (MI) is a fundamental privacy risk for machine learning: black\mbox{-}box and white\mbox{-}box attacks exploit systematic differences in a model’s behavior on training (member) vs.\ held\mbox{-}out (non\mbox{-}member) data \cite{shokri2017membership,carlini2022membership,yeom2018privacy,salem2018ml,hu2022membership,humphries2023investigating}. 
Recent work extends MI to graph neural networks (GNNs) by defining the \emph{instance} at the node, edge, or graph level and tailoring the attack surface accordingly \cite{wang2021membership,wu2021adapting,olatunji2021membership,he2021stealing,wang2024subgraph,dai2022comprehensive}. 
Typical node\mbox{-}level attacks query a trained GNN with a target node (and optionally its neighborhood), using confidence scores, losses, or shadow\mbox{-}model posteriors as membership signals; threat models vary in (i) access to outputs/parameters and (ii) the relational context included with the query (features only vs.\ features$+$edges). 

Despite this progress, the use of  \emph{relational structure} that distinguishes GNNs from i.i.d.\ models has not been examined systematically as a driver of MI risk. In particular, prior attacks typically treat the graph as a fixed backdrop and do not analyze how training graph construction  and inference time edge access modulates attack success. 


To defend against information leaks via membership inference with theoretical gurantees, \emph{differential privacy} (DP) \cite{dwork2006calibrating} has emerged as the most widely used formalism: it guarantees that the presence or absence of any single data point has only a limited effect on a mechanism’s output, thereby constraining an adversary’s ability to infer membership. 
Existing methods for privacy preserving training of GNNs in centralised setting can be grouped into three main classes: (i) knowledge distillation based \cite{papernot2018scalable} in which the model trained on private data is never released but is used to train a public model under a weak supervision setting \cite{olatunji2023releasing}, (ii) gradient perturbation techniques in which typically DP-SGD \cite{abadi16} is extended for the case of graphs \cite{xiang2024preserving}, and  (iii) aggregation perturbation where the key idea is to add noise to the aggregate information obtained from the GNN neighborhood aggregation step \cite{sajadmanesh2022gap,sajadmanesh2021locally}.

In i.i.d.\ domains, membership–inference (MI) attacks are widely used to \emph{audit} differentially private (DP) learning by providing empirical lower bounds on leakage \cite{yeom2018privacy,humphries2023investigating,jayaraman2019evaluating,jagielski2020auditing}. While it may seem that these auditing algorithms carry over directly to graphs, we showed that this intuition can be misleading. We made the assumptions explicit and provided \emph{theoretical} support for how train graph construction strategies affect the validity of these audits.


\section{Discussion and Key Insights}
We investigated node-level membership inference (MI) against GNNs in the inductive setting, formalizing the problem through a joint distribution over node--neighborhood tuples and an explicit membership inference experiment. Our analysis is grounded in a key distinction between graph learning and classical non-connected data settings: in GNNs, understanding privacy risk fundamentally requires understanding how the training graph is constructed, what structural properties it exhibits, and how graph structure is utilized during training and inference. Our theoretical and empirical results lead to the following key insights, which we hope will motivate future work in this direction.

\subsection{Generalization Gap in GNNs}
In machine learning, the generalization gap typically refers to the difference between empirical performance on the training set and expected performance on unseen samples drawn from the same underlying data distribution. A large gap is often associated with overfitting and increased memorization of training samples.

In graph-based node-level learning, however, the relationship between generalization gap and membership inference risk becomes more subtle because node instances are not processed independently. In GNNs, predictions depend not only on node features, but also on the neighborhood structure available during training and inference. Consequently, the generalization gap may depend not only on which nodes belong to the training set and the underlying node feature distribution, but also on which edges and neighborhood connections are exposed during training and inference.

Our empirical analysis confirms this subtlety. We observe that changing only the exposed edge structure can substantially alter the performance gap, while its relationship to membership advantage does not always follow the expected trend. In particular, when no edges are exposed during inference, the performance gap may increase relative to the Original split setting, whereas the membership advantage simultaneously decreases. Such observations suggest that classical notions of overfitting and memorization may need to be revisited in graph learning settings, since differences in model behavior can arise not only from learned parameters, but also from the structural context available during inference.

\subsection{Consequences for Differentially Private Models.}
We showed that train and test instances are not statistically exchangeable in practical graph learning settings (c.f. Theorem~\ref{thm:non_exchangeability}). As a result, the empirical membership advantage computed using membership inference attacks, including the experiment formalized in this work, cannot in general be interpreted as a lower bound on the differential privacy (DP) budget in graph settings.

To understand the above in more practical terms, consider first a strong adversary with access to the sampling strategy and \(n-1\) data points. In such cases, the remaining node may become distinguishable purely because of structural constraints induced by the graph construction process. In particular, as shown in the proof of Theorem \ref{thm:non_exchangeability}, certain node--neighborhood configurations may become impossible under specific sampling strategies and observed graph structure, allowing membership to be inferred independently of the learned model parameters, even when the model itself is trained using DP.

Furthermore, even for weaker adversaries without access to the remaining \(n-1\) data points, graph-dependent sampling strategies can still influence the learning process through uneven graph coverage and under-representation of particular structural regions or node types. Such coverage bias can affect model generalization and increase behavioral differences between member and non-member nodes. Importantly, this distinguishability may arise even when the learning algorithm itself satisfies differential privacy, because part of the leakage originates from structural biases introduced by the graph construction and sampling process. Consequently, the observed membership advantage may not be fully explained or bounded by the DP parameters of the trained model alone.
We therefore advocate the need for new privacy notions in graph machine learning that explicitly account for the training graph construction strategy.
\subsection{Modeling Privacy Risk }
As final remarks, our analysis suggests that both privacy attacks and defenses in graph learning require more realistic threat formulations. An important open question is what can meaningfully be protected once the graph construction or sampling mechanism itself is even partially known to the adversary.  Moreover, since node properties themselves are influenced by the surrounding graph structure, different nodes cannot always be treated as equivalent from a privacy perspective. Nodes occupying different structural roles may inherently exhibit different levels of membership risk, requiring more fine-grained analysis of privacy leakage in graph learning settings.




\begin{acks}
This publication is part of the project PriXAI with file number VI.Vidi.243.224 of the research programme Vidi ENW which is financed by the Dutch Research Council (NWO) under the grant https://doi.org/10.61686/FATII77836.
The author used generative AI-based tools to revise parts of the text, improve flow and correct any typos, grammatical errors, and awkward phrasing.
\end{acks}

\bibliographystyle{ACM-Reference-Format}
\bibliography{references}

\appendix

\appendix
\section{Aggregation Operations in GNNs}
\label{sec:aggre}
We focus on three key aggregation operations in GNNs, which serve as the foundation for more complex aggregation methods. The first aggregation operation is based on the Graph Convolutional Network (\gcn) model proposed by \cite{kipf2016semi}. In this model, each node's representation is updated at each layer by computing a weighted average of its own features and those of its neighbors. 
Let $\mathbf{d}$ be the graph degree vector obtained after adding self loops to all nodes. The $ith$ element $\mathbf{d}_i$ denotes the degree of node $i$. The aggregation operation in \gcn is then given as 
\begin{equation}\label{eq:conv}
\mathbf{z}_{i}^{(\ell)} \leftarrow \mathbf{W}^{(\ell)}\sum_{j\in \mathcal{N}(i)\cup i} \frac{1}{\sqrt{\mathbf{d}_{i}\mathbf{d}_{j}}} \mathbf{x}_{j}^{(\ell-1)}.
\end{equation}
Here $\mathbf{W}^{(\ell)}$ is the projection matrix corresponding to layer $\ell$.
Unlike \gcn, \sage \cite{hamilton2017inductive} explicitly differentiates between the representation of the query node (referred to as the ego node) and that of its neighbors. In \sage, aggregation is performed by concatenating the ego node’s representation with the aggregated representations of its neighbors. Several methods for neighbor aggregation are proposed; in this work, we adopt the mean aggregation method, which computes the average of the neighbors' representations. After applying the projection operation, the simplified aggregation in \sage can be expressed as follows:
\begin{equation}
\label{eq:sage}
  \mathbf{z}^{(\ell)} = \mathbf{W}^{(\ell)}_1 \mathbf{x}^{(\ell-1)}_i + \mathbf{W}^{(\ell)}_2 \cdot
        {1\over |\mathcal{N}(i)|}\sum_{j \in \mathcal{N}(i)} \mathbf{x}^{(\ell-1)}_j
\end{equation}
In both \gcn and \sage, the transformation step following aggregation is simply the application of a ReLU non-linearity to the aggregated representation.
\gat \cite{velivckovic2017graph} introduces attention weights
over the edges to perform the aggregation operation as follows.
\begin{align}
   \mathbf{z}_i^{(\ell,p)} = 
        \sum_{j \in \mathcal{N}(i) \cup i}
        \alpha^{(p)}_{i,j}\mathbf{W}^{(\ell,p)}\mathbf{x}_{j}^{(\ell-1)}, 
\end{align}
where the attention coefficients $\alpha^{(p)}_{i,j}: (\mathbf{x}_{i},\mathbf{x}_{j})\rightarrow \mathbb{R}$ can be seen as the importance weights for aggregation over edge $(i,j)$ and computed as in \cite{velivckovic2017graph}. In  the transformation step the $P$ representations corresponding to $P$ attention mechanisms are concatenated after a non-linear transformation to obtain a single representation at layer $\ell$.
\begin{equation}
\label{eq:gat}
    \mathbf{x}_{i}^{(\ell)}=||_{p=1}^{P} \operatorname{ReLU}\left( \mathbf{z}_{i}^{(\ell,p)} \mathbf{W}^{(p\ell)} \right).
\end{equation}
In this work we only experiment with the above representative aggregation types and do not account for other differences in these methods. For example, \sage was proposed to improve scalability in GNN training in which neighborhood sampling was used to reduce the size of the computational graph. In our implementation we restrict to only vary the aggregation types and the provided neighborhood of the input query node is completely used.
\section{Hyperparameter Details}
\label{sec:imp}
 All three target models were implemented as two-layer graph neural networks with a hidden dimension of $64$ and were trained for $200$ epochs using the Adam optimizer with learning rate $0.01$ and weight decay $5 \times 10^{-4}$. For \gat, we used $8$ attention heads in the first layer and a single attention head in the output layer. 

The membership inference attack model was implemented as a 2-layer MLP with a hidden layer of dimension $64$ and a single output unit.
 Each of the attack models was trained for $200$ epochs using Adam with learning rate $0.001$ and batch size $32$. Binary cross-entropy loss was applied to sigmoid-transformed outputs. No dropout or additional hyperparameter tuning was used in the current implementation.

\section{Detailed Results}
\label{sec:testtrainaccu}
Detailed train and test accuracy scores for all datasets are presented in Tables
\ref{tab:performance_cora_train10}--\ref{tab:performance_cora_train50} (\cora), \ref{tab:performance_chameleon_train10}--\ref{tab:performance_chameleon_train50} (\chameleon), and
\ref{tab:performance_pubmed_train10}--\ref{tab:performance_pubmed_train50} (\pubmed). We further provide detailed performance-gap and membership-advantage scores for all datasets in Tables
\ref{tab:gap_adv_cora_train10_r10}--\ref{tab:gap_adv_cora_train50_r80} (\cora),
\ref{tab:gap_adv_chameleon_train10_r10}--\ref{tab:gap_adv_chameleon_train50_r80} (\chameleon), and
\ref{tab:gap_adv_pubmed_train10_r10}--\ref{tab:gap_adv_pubmed_train50_r80} (\pubmed).

\begin{table*}[t]
\centering
\small
\caption{Performance on Cora with 10\% of nodes in the training set of the victim model under random and snowball sampling.}
\label{tab:performance_cora_train10}
}
\end{table*}





\end{document}